\documentclass[letterpaper, 10 pt,conference]{ieeeconf}

\IEEEoverridecommandlockouts
\usepackage{graphicx}
\usepackage{cite}
\graphicspath{{Figures/}}
\usepackage{float,amsmath,amsfonts,stmaryrd}
\usepackage{mathtools, cuted}
\usepackage[ruled,vlined,linesnumbered]{algorithm2e}
\usepackage{algpseudocode}
\usepackage{booktabs}
\usepackage{paralist}

\newtheorem{problem}{Problem}[section]
\newtheorem{theorem}{Theorem}[section]
\newtheorem{lemma}{Lemma}[section]

\newtheorem{remark}{Remark}[section]
\newtheorem{assumption}{Assumption}[section]
\newtheorem{definition}{Definition}[section]

\usepackage{xcolor}
\definecolor{wheat}{rgb}{0.96,0.87,0.70}

\DeclareMathOperator*{\argmax}{arg\,max}

\title{\LARGE \bf
Robustness-Based Synthesis for Time Window Temporal Logic Specifications via Mixed-Integer Linear Programming
}

\author{
Philip Smith$^{1,\ast}$, Ahmad Ahmad$^{1,\ast}$, Kevin Leahy$^1$
\thanks{$^\ast$ Equal contribution, $^1$ Department of Robotics Engineering, Worcester Polytechnic Institute, {\tt\small \{psmith3,aahmad4,kleahy\}@wpi.edu}}
\thanks{DISTRIBUTION STATEMENT A. Approved for public release; distribution is unlimited. OPSEC\# (Pending, NOT approved for Release)}
}

\begin{document}

\maketitle

\thispagestyle{empty}
\pagestyle{empty}

\begin{abstract}
Time Window Temporal Logic (TWTL) is a rich specification language for
cyber-physical systems that can compactly express sequential tasks with
explicit timing constraints.
In this paper, we consider the problem of synthesizing control inputs for
discrete-time linear systems subject to TWTL task specifications.
Building on the quantitative semantics (robustness) recently introduced
for TWTL in \cite{ahmad2023TWTLrobustness}, we encode the robust satisfaction
of a TWTL formula as a set of Mixed-Integer Linear constraints and pose
synthesis as a Mixed Integer Linear Program (MILP) that maximizes the robustness degree.
We prove that any feasible solution with positive objective value guarantees
Boolean satisfaction of the specification.
We address two synthesis settings: an \emph{open-loop} formulation that
optimizes the full control sequence from the initial state, and a
\emph{closed-loop} receding-horizon Model Predictive Controller (MPC) formulation that re-solves the
MILP at each step using the current measured state.
A key feature of our MPC formulation is a \emph{task-adaptive horizon}
that exploits the TWTL Deterministic Finite Automaton (DFA) to determine
the active sub-task at each step, limiting the prediction horizon to the
remaining window of the current task rather than the full formula horizon, this makes each re-solve
significantly cheaper than the initial open-loop solve.
\end{abstract}

\section{Introduction}\label{sec:intro}
Robot tasks often use spatio-temporal constraints: a surveillance drone which must dwell for a duration in multiple regions, or a multi-robot team which must complete sub-tasks in sequence while respecting timing windows. Temporal logics (TLs)~\cite{baier2008principles,konurSurveyTemporalLogics2013} address this by providing a language for stating what a system must accomplish over time. As such, they have become a standard tool for encoding high-level task specifications for cyber-physical systems.
Signal Temporal Logic (STL)~\cite{maler2004STLpaper} and Metric Temporal
Logic (MTL)~\cite{MTL1990} are widely used concrete-time logics.
Time Window Temporal Logic (TWTL)~\cite{Cristi2017TWTL} was introduced more
recently as an alternative that (i)~expresses sequential tasks compactly
through a dedicated concatenation operator, and (ii)~admits automata
translations whose complexity is \emph{independent} of the formula time
bounds, making it particularly attractive for automata-based planning and
synthesis~\cite{Cristi2020_TWTLrrt}.

Satisfying a TL specification requires synthesizing a control policy \cite{belta2019formal_review}. Optimization-based synthesis from TL specifications is enabled by a robustness measure quantifies Boolean satisfaction as a real value. Such a measure allows synthesis to optimize satisfaction margin, such as clearance of obstacles, alongside other optimization concerns like path length or battery life, and lends itself naturally to an MPC formulation. Optimizing robustness in this way allows plans to survive in a real, noisy world. 
For STL, the classical robustness~\cite{donze2010robustSTL} has been encoded
as Mixed-Integer Linear constraints by Raman et al.~\cite{raman2014model},
who also propose a receding-horizon MPC framework where the MILP is re-solved
at each step with a \emph{fixed} horizon equal to the formula time bound,
with the executed history pinned as equality constraints.
Until recently, TWTL lacked a robustness measure.
In~\cite{ahmad2023TWTLrobustness}, we introduced two sound quantitative semantics for TWTL--- a $\mathrm{min}$-$\mathrm{max}$-based robustness ($\rho$) and an Arithmetic Geometric Mean (AGM) robustness \cite{cristi2019AGMstl}---together with runtime monitors for partial runs.

A robustness measure only scores a given run; it does not produce one. What has been missing for TWTL is a synthesis procedure to optimize for $\rho$. In this paper, we leverage $\rho$ from~\cite{ahmad2023TWTLrobustness} to develop the first MILP synthesis procedure for TWTL.

\textbf{Contributions.}
\begin{inparaenum}
    \item We encode robustness as a recursive big-M
          MILP encoding and prove correctness
          (Sec.~\ref{sec:milp_encoding}, Theorem~\ref{thm:milp_correctness});
    \item We formulate an open-loop controller
          (Problem~\ref{prob:open_loop}) and a closed-loop MPC controller
          (Problem~\ref{prob:mpc}) with a \emph{task-adaptive horizon} that
          exploits the TWTL DFA to bound the prediction horizon rather than use the full formula horizon $T$ (Sec.~\ref{sec:mpc});
    \item We propose a warm-start strategy for the MPC re-solves
          that reduces the per-step online cost to linear in the number of TWTL sub-tasks for state updates
          and $O(1)$ for horizon shrinkage, plus the MILP re-solve on a
          progressively smaller problem
          (Sec.~\ref{subsec:warm_start}); and 
    \item We demonstrate the incresed efficiency of direct TWTL enconding by comparing against TWTL translated to STL (Sec.~\ref{subsec:open_loop_solve_time})
\end{inparaenum}

\textbf{Related work.}
While other works have previously used TWTL for monitoring \cite{b2022runtimeMonitForTWTL,bonnahQTWTLQualityAware2023}, or as a basis for sampling-based planning \cite{Cristi2017TWTL,Cristi2020_TWTLrrt,ahmad2026rrteta}, we perform synthesis directly using the specification and a model of system dynamics (Sec. \ref{subsec:dynamics}). Unlike sampling-based planners, which offer probabilistic completeness but no hard guarantees on robustness degree, our MILP formulation deterministically maximizes robustness and enables closed-loop MPC with disturbance rejection. The only prior work to use MILP with TWTL is~\cite{kamaleOptimalControlSynthesis2024}, which encodes automaton transitions (not robustness) as the MILP objective; ours is the
first robustness-maximizing MILP for TWTL.
Our encoding is directly analogous to~\cite{raman2014model} for STL, with
TWTL's hold and within operators playing the roles of STL's always and
eventually. This approach was also used successfully with another temporal logic (wSTL) in \cite{cardonaMixedIntegerLinear2023}. Our MPC horizon strategy differs from Raman et al.'s \cite{raman2014model} fixed-$H$ approach by
exploiting TWTL's automata structure to obtain a task-adaptive horizon,
a mechanism that is not available for STL without an automaton.

\section{Preliminaries}\label{sec:prelims}

\subsection{Dynamical System}\label{subsec:dynamics}

Consider a discrete-time linear time-invariant (LTI) system:
\begin{equation}\label{eq:lti_system}
    \begin{aligned}
        x_{t+1} &= A x_t + B u_t,\\
        o_t     &= C x_t,
    \end{aligned}
\end{equation}
where $x_t \in \mathcal{X} \subseteq \mathbb{R}^n$ is the state,
$u_t \in \mathcal{U} \subseteq \mathbb{R}^m$ is the control input,
and $o_t \in \mathcal{O} \subseteq \mathbb{R}^p$ is the observable output; $\mathcal{X}$ and
$\mathcal{U}$ are convex polytopes.
$\mathbf{o}_{t_1,t_2} := o_{t_1}o_{t_1+1}\cdots o_{t_2}$ denotes the
output word over $[t_1,t_2]$. We fix a finite set $\Pi = \{\pi_{1}, \dots, \pi_{k}\}$ of atomic propositions. Each $\pi_A \in \Pi$ is identified with a satisfaction region $A \subseteq \mathcal{O}$. An observation $o_t$ satisfies $\pi_A$ iff $o_t \in A$. In \cite{ahmad2023TWTLrobustness} a general predicate function $h$ is used to represent these atomic propositions, which we spoecialize to an affine form via Assumtion \ref{assum:affine_predicates}

\begin{assumption}\label{assum:affine_predicates}
All atomic propositions $\pi_A \in \Pi$ have affine predicate functions
$h(o) = \mathbf{c}_A^T o - b_A$, where $\mathbf{c}_A \in \mathbb{R}^p$
and $b_A, \sigma \in \mathbb{R}$, so that $A = \{o \mid h(o) > \sigma\}$.
Without loss of generality we absorb $\sigma$ into $b_A$, so that
$h(o_t) = \mathbf{c}_A^T C x_t - b_A$ throughout.
\end{assumption}

\subsection{Time Window Temporal Logic}\label{subsec:TWTL}

TWTL syntax is defined, inductively, as follows\cite{Cristi2017TWTL}.
\begin{equation}\label{eq:TWTL_syntax}
    \phi\;::=\;H^{d}s\mid H^{d}\neg s\mid\phi_1\wedge\phi_2
    \mid\phi_1\vee\phi_2\mid\neg\phi\mid\phi_1\cdot\phi_2
    \mid[\phi]^{[a,b]}
\end{equation}
where $s$ is an atomic proposition with affine predicate function by Assumption \ref{assum:affine_predicates}, $H^d$ is the \emph{hold} operator, $\cdot$ is the \emph{concatenation}
operator, and $[\phi]^{[a,b]}$ is the \emph{within} operator, with
$d,a,b \in \mathbb{Z}_{\geq 0}$ and $b \geq a$. From a given TWTL formula, the time horizion $T = ||\phi|| \in \mathbb{N}$ is computed, recursively, as follows \cite{ahmad2023TWTLrobustness}. 

\begin{equation}
    ||\phi||:=\begin{cases}
        \max(||\phi_1||,||\phi_2||); & \text{if }\phi\in\{\phi_1\wedge\phi_2,\phi_1\vee\phi_2\}\\
        ||\phi_1||;& \text{if } \phi = \neg \phi_1\\
        ||\phi_1||+||\phi_2||+\Delta t;&\text{if } \phi = \phi_1\cdot\phi_2\\
        d\Delta t;& \text{if }\phi=H^d\pi_{A}\\
        b;&\text{if }\phi = [\phi_1]^{[a,b]}
    \end{cases}
\end{equation}

The hold operator $H_d s$ specifies that $s \in AP$ should be repeated for $d$ time units. The semantics of $H_d \neg s$ is defined similarly, but for $d$ time units only symbols from $AP \setminus \{s\}$ should appear. The word $\mathbf{o}_{t_1, t_2}$ satisfies $\phi_1 \wedge \phi_2$, $\phi_1 \vee \phi_2$, or $\neg \phi$ if $\mathbf{o}_{t_1, t_2}$ satisfies both formulae, at least one formula, or does not satisfy the formula, respectively. The within operator $[\phi]_{[a,b]}$ bounds the satisfaction of $\phi$ to the time window $[a,b]$. The concatenation operator $\phi_1 \cdot \phi_2$ specifies that first $\phi_1$ must be satisfied, and then immediately $\phi_2$ must be satisfied \cite{Cristi2017TWTL}.

\subsection{TWTL DFA}\label{subsec:dfa}

Every TWTL formula $\phi$ translates to a DFA \cite{Cristi2017TWTL}
$\mathcal{A}_\phi = (Q, \Sigma, \delta, q_0, F)$, where $Q$ is the state
set, $\Sigma = 2^\Pi$, $\delta: Q\times\Sigma\to Q$, $q_0$ is the initial
state, and $F\subseteq Q$ is the accepting set.
A critical property of TWTL is that $|Q|$ is \emph{independent} of the
magnitude of the time bounds $a$, $b$, $d$~\cite{Cristi2017TWTL}.
The DFA state $q_{t+1} = \delta(q_t, l(o_t))$ is updated at each step
via a single table lookup ($O(1)$), where $l: \mathbb{R}^p \to 2^\Pi$.

\subsection{TWTL Robustness}\label{subsec:twtl_rho_recall}

\begin{definition}[TWTL Robustness~\cite{ahmad2023TWTLrobustness}]
\label{def:traditional_robustness}
Given $\phi$ and $\mathbf{o}_{t_1,t_2}$, the robustness
$\rho(\mathbf{o}_{t_1,t_2},\phi)$ is defined recursively as:
\begin{flalign}
\begin{aligned}\label{eq:TWTL_robustness}
    \rho(\mathbf{o}_{t_1,t_2},H^{d}\pi_{A}) &:=\begin{cases}
        \min\limits_{t\in[t_1,t_1+d]}h(o_t); & (t_2-t_1\geq d) \\
        \rho_{\bot}; & \text{otherwise}
    \end{cases} & \\
    \rho(\mathbf{o}_{t_1,t_2},\phi_1\wedge\phi_2)&:=
        \min\{\rho(\mathbf{o}_{t_1,t_2},\phi_1),\rho(\mathbf{o}_{t_1,t_2},\phi_2)\}&\\
    \rho(\mathbf{o}_{t_1,t_2},\phi_1\vee\phi_2)&:=
        \max\{\rho(\mathbf{o}_{t_1,t_2},\phi_1),\rho(\mathbf{o}_{t_1,t_2},\phi_2)\}&\\
    \rho(\mathbf{o}_{t_1,t_2},\neg\phi)&:=-\rho(\mathbf{o}_{t_1,t_2},\phi)&\\
    \rho(\mathbf{o}_{t_1,t_2},\phi_1\cdot\phi_2)&:= \\
        &\max_{t\in[t_1,t_2)}\!\left\{\min\bigl\{\rho(\mathbf{o}_{t_1,t},\phi_1),
        \rho(\mathbf{o}_{t+1,t_2},\phi_2)\bigr\}\right\}&\\
    \rho(\mathbf{o}_{t_1,t_2},[\phi]^{[a,b]})&:=\begin{cases}
        \max\limits_{t\geq t_1+a}\{\rho(\mathbf{o}_{t,t_1+b},\phi)\};
        & (t_2-t_1\geq b)\\
        \rho_{\bot}; & \text{otherwise}
    \end{cases} &
\end{aligned}
\end{flalign}
where $\rho_{\bot}$ is a large negative constant.
\end{definition}

\begin{lemma}[Soundness~\cite{ahmad2023TWTLrobustness}]\label{lem:soundness}
$\rho(\mathbf{o}_{t_1,t_2},\phi)>0 \implies \mathbf{o}_{t_1,t_2}\models\phi$
and
$\rho(\mathbf{o}_{t_1,t_2},\phi)<0 \implies
\mathbf{o}_{t_1,t_2}\not\models\phi$.
In words, a positive robustness means the word satisfies the specification.
\end{lemma}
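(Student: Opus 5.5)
We will prove both implications together by structural induction on $\phi$, following the grammar \eqref{eq:TWTL_syntax}. The induction hypothesis, for every strict subformula $\psi$ and every word $\mathbf{o}_{s_1,s_2}$, is the pair of statements: $\rho>0$ implies $\mathbf{o}_{s_1,s_2}\models\psi$, and $\rho<0$ implies $\mathbf{o}_{s_1,s_2}\not\models\psi$. We need both halves in the hypothesis, because negation swaps them.

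\emph{Base cases.} Take $\phi=H^d\pi_A$. If the word is too short ($t_2-t_1<d$), then $\rho=\rho_\bot<0$, and the word cannot satisfy the hold, so only the second implication is relevant and it holds. Otherwise $\rho>0$ means $\min_{t\in[t_1,t_1+d]}h(o_t)>0$. Under Assumption~\ref{assum:affine_predicates}, with $\sigma$ absorbed into $b_A$, this says $o_t\in A$ for every $t$ in the window, which is exactly Boolean satisfaction of the hold. Conversely, $\rho<0$ gives some $t$ with $h(o_t)<0$, so $o_t\notin A$ and the hold fails. The case $H^d\neg s$ is identical with $-h$ in place of $h$.

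\emph{Inductive cases.} For conjunction, $\min>0$ forces both subformula robustness values positive, and $\min<0$ forces one negative; the hypothesis then gives the claim. Disjunction is symmetric with $\max$. For $\neg\phi$, positive robustness of $\neg\phi$ is negative robustness of $\phi$, and the second half of the hypothesis applies; the other direction works the same way.

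For concatenation, $\rho>0$ means there is a split point $t^\ast$ with $\rho(\mathbf{o}_{t_1,t^\ast},\phi_1)>0$ and $\rho(\mathbf{o}_{t^\ast+1,t_2},\phi_2)>0$. By the hypothesis these witness the Boolean semantics of $\phi_1\cdot\phi_2$. If $\rho<0$, then every split point has at least one negative component, so no decomposition satisfies both parts, and the word does not satisfy $\phi_1\cdot\phi_2$.

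For the within operator, the short-word case is as in the hold case. Otherwise $\rho>0$ yields some $t\ge t_1+a$ with $\rho(\mathbf{o}_{t,t_1+b},\phi)>0$, and the hypothesis gives satisfaction of $\phi$ inside the window $[t_1+a,t_1+b]$. If $\rho<0$, then every candidate start time gives negative robustness, so no start time inside the window satisfies $\phi$.

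\emph{Main obstacle.} The hard step is aligning these quantitative clauses exactly with the Boolean TWTL semantics of \cite{Cristi2017TWTL}. For concatenation and within, the Boolean semantics quantify over sub-words whose end may precede $t_2$; the formula itself fixes the satisfying sub-word, for instance $\phi_1$ must be satisfied on a prefix of exact length determined by its semantics. We will first check that the Boolean semantics of each operator is exactly ``there exists an index in the same range used by the $\max$'' together with the hypothesis on the sub-words. Where the Boolean definition is stated on a word that strictly contains the sub-word, we will prove a small auxiliary monotonicity fact: satisfaction of $\psi$ depends only on the first $\|\psi\|$ samples. This lets the witnesses transfer between the two semantics.

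Finally, zero robustness is excluded from both implications, which is why the statement is strict. This is consistent with the strict inequality $h(o)>\sigma$ defining $A$.
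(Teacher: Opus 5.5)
\textbf{Verdict: there is a genuine gap, in the positive half of the concatenation step.} The paper gives no proof of its own here; it imports the lemma from \cite{ahmad2023TWTLrobustness}, so your argument has to stand alone.

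Structural induction with both halves in the hypothesis is the right shape. The hold, $\wedge$, $\vee$, $\neg$ and within steps are fine, as is the \emph{negative} half of concatenation. The failing step is the one you flag as the main obstacle but do not resolve.

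In the semantics of \cite{Cristi2017TWTL} (``first $\phi_1$, then immediately $\phi_2$''), the split is not existential. One has $\mathbf{o}_{t_1,t_2}\models\phi_1\cdot\phi_2$ iff $\mathbf{o}_{t_e+1,t_2}\models\phi_2$, where $t_e=\arg\min_{t_1\le t<t_2}\{\mathbf{o}_{t_1,t}\models\phi_1\}$. The robustness instead maximizes over all splits, so its witness $t^\ast$ need not equal $t_e$.

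Your monotonicity fact can at best move the \emph{right} endpoint of the $\phi_1$-word from $t^\ast$ to $t_e$. It cannot move the \emph{left} endpoint of the $\phi_2$-word from $t^\ast+1$ to $t_e+1$, and that shift changes which samples $\phi_2$ reads.

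No argument can close this step, because the implication is false there. Take $\phi=H^0\pi_A\cdot H^0\pi_C$ on $\mathbf{o}_{0,2}$ with $h_A(o_0)>0$, $h_C(o_1)<0$, $h_C(o_2)>0$. Definition~\ref{def:traditional_robustness} gives $\rho=\min\{h_A(o_0),h_C(o_2)\}>0$ via the split $t=1$. Yet $t_e=0$ and $o_1\notin C$, so the word violates $\phi$.

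Evaluating only on words of length exactly $\|\phi\|$ does not rescue this, because within evaluates its body on longer words. For example, $[H^0\pi_A\cdot H^0\pi_C]^{[0,5]}$ on $\mathbf{o}_{0,5}$, with $o_0$ the only sample in $A$, $o_1\notin C$ and $o_2\in C$, fails the same way. Through your negation step, the failure also breaks the negative half for $\neg(\phi_1\cdot\phi_2)$.

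So you must fix the Boolean semantics before the induction can close. There are two options:
\begin{itemize}
\item Read concatenation existentially: $\exists t\in[t_1,t_2)$ with $\mathbf{o}_{t_1,t}\models\phi_1$ and $\mathbf{o}_{t+1,t_2}\models\phi_2$. Then every step of your proof goes through as written. Drop the auxiliary monotonicity lemma, which is false under this reading: $H^0\pi_A\cdot H^0\pi_C$ then depends on every sample of the word.
\item Keep the earliest-split semantics of \cite{Cristi2017TWTL}. Then the lemma as stated is false. What must change is the concatenation clause of the robustness (e.g.\ evaluating it only at the earliest split where $\phi_1$ holds) or the statement itself, not the proof.
\end{itemize}
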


\section{Problem Formulation}\label{sec:problem}
We introduce two synthesis problems: an open-loop problem and an MPC closed-loop problem. 
\begin{problem}[Open-Loop Synthesis]\label{prob:open_loop}
Given system~(\ref{eq:lti_system}) with initial state $x_0$, formula $\phi$
under Assumption~\ref{assum:affine_predicates}, and sets $\mathcal{X}$,
$\mathcal{U}$, find the control sequence
$\mathbf{u}^\star = u_0^\star \cdots u_{T-1}^\star$ solving:
\begin{equation}\label{eq:synthesis_opt}
\begin{aligned}
    \mathbf{u}^\star &= \argmax_{\mathbf{u}} \; \rho(\mathbf{o}_{0,T},\phi) \\
    \text{s.t.}\quad & \text{dynamics constrants from \eqref{eq:lti_system}} \\
\end{aligned}
\end{equation}
\end{problem}

\begin{remark}[Open-loop nature]\label{rem:open_loop}
Problem~\ref{prob:open_loop} computes $\mathbf{u}^\star$ once at $t=0$
from the fixed initial state $x_0$. By Lemma~\ref{lem:soundness}, any solution with
$\rho(\mathbf{o}_{0,T},\phi)>0$ guarantees $\mathbf{o}_{0,T}\models\phi$.
However, there is no mechanism to react to disturbances during execution. Therefore, we introduce problem \ref{prob:mpc} to address this limitation.

\end{remark}

\begin{problem}[Receding-Horizon MPC Synthesis]\label{prob:mpc}
Given system~(\ref{eq:lti_system}), formula $\phi$, and sets $\mathcal{X}$,
$\mathcal{U}$: at each time step $t \in [0,T{-}1]$, given the measured
state $x_t$, the DFA state $q_t$, the active task index $i(q_t)$, and
its task-adaptive horizon $H_t$ (Definition~\ref{def:adaptive_horizon}),
find a closed-loop receding horizon control that exploits the residual formula at the current DFA state.
\end{problem}

\section{MILP Encoding of TWTL Robustness}\label{sec:milp_encoding}

We show that Problems~\ref{prob:open_loop} and~\ref{prob:mpc} reduce to
MILPs.
Following~\cite{raman2014model}, for each sub-formula $\psi$ of $\phi$
evaluated over $\mathbf{o}_{t_1,t_2}$, we introduce a continuous variable
$r(\psi,t_1,t_2)$ representing $\rho(\mathbf{o}_{t_1,t_2},\psi)$.
The top-level variable $r(\phi,0,T)$ (or $r(\phi_t,t,t{+}H_t)$ in MPC)
is the MILP objective. A strong indicator of MILP performance is binary variable count, so we will mention the number of binary variables required for each operator (Remark \ref{rem:complexity}).


\subsection{Big-$M$ Constant}\label{subsec:bigM}

All encodings use $M > 0$ satisfying $M \geq \rho_{\top} - \rho_{\bot}$,
where $\rho_{\top}$, $\rho_{\bot}$ bound the achievable robustness over
$\mathcal{X}$.
Since $h(o_t) = \mathbf{c}_A^T C x_t - b_A$ is linear and $\mathcal{X}$
is a polytope, tight values are obtained by solving simple LPs over
$\mathcal{X}$ \emph{once offline}.

\subsection{Operator Encodings}\label{subsec:encodings}

\subsubsection{Predicate}
For $\psi = \pi_A$, introduce $r(\pi_A, t)\in\mathbb{R}$ for each $t$:
\begin{equation}\label{eq:enc_pred}
    r(\pi_A, t) = h(o_t) = \mathbf{c}_A^T C x_t - b_A.
\end{equation}
One linear equality per time step; \emph{no binary variables}. The sign of $r(\pi_A,t)$ encodes satisfaction.

\subsubsection{Hold (Temporal Conjunction)}
Analogue of STL's always $\square_{[0,d]}\pi_A$.
For $\psi = H^d\pi_A$ with $t_2-t_1\geq d$:
\begin{equation}\label{eq:enc_hold}
    r(H^d\pi_A,t_1,t_2) \leq r(\pi_A,\,t_1+k),
    \qquad k=0,1,\ldots,d.
\end{equation}
Tight at $\min_{k} r(\pi_A,t_1{+}k)$ under maximization.
\emph{No binary variables.}
If $t_2-t_1<d$, set $r(H^d\pi_A,t_1,t_2)=\rho_{\bot}$.

\subsubsection{Negation}
\begin{equation}\label{eq:enc_neg}
    r(\neg\varphi,t_1,t_2) = -\, r(\varphi,t_1,t_2).
    \quad\text{\emph{No binary variables.}}
\end{equation}

\subsubsection{Conjunction}
\begin{equation}\label{eq:enc_conj}
    r(\phi_1\wedge\phi_2,t_1,t_2) \leq r(\phi_i,t_1,t_2),\quad i=1,2.
\end{equation}
Tight at minimum under maximization.
\emph{No binary variables.}

\subsubsection{Disjunction}
Using binary $z^{\vee}\in\{0,1\}$:
\begin{flalign}
\begin{aligned}\label{eq:enc_disj}
    &r(\phi_1\vee\phi_2,t_1,t_2) \geq r(\phi_i,t_1,t_2),\quad i=1,2,\\
    &r(\phi_1\vee\phi_2,t_1,t_2) \leq r(\phi_1,t_1,t_2)
        + M(1-z^{\vee}),\\
    &r(\phi_1\vee\phi_2,t_1,t_2) \leq r(\phi_2,t_1,t_2)
        + M\,z^{\vee}.
\end{aligned}
\end{flalign}
\emph{One binary variable} per disjunction.

\subsubsection{Within (Temporal Disjunction)}
Analogue of STL's eventually $\lozenge_{[a,b]}\varphi$.
Using binary $z^W_t\in\{0,1\}$ for each $t\in[t_1{+}a,\,t_1{+}b]$:
\begin{align}
\begin{aligned}\label{eq:enc_within}
    \sum_{t=t_1+a}^{t_1+b} z^W_t &= 1,\\
    r([\varphi]^{[a,b]},t_1,t_2) &\geq r(\varphi,t,t_1+b), \quad\forall t\in[t_1{+}a,\,t_1{+}b],\\
    r([\varphi]^{[a,b]},t_1,t_2) &\leq r(\varphi,t,t_1+b) + M(1-z^W_t), \\
        &\hphantom{\geq r(\varphi,t,t_1+b)}\forall t\in[t_1{+}a,\,t_1{+}b].
\end{aligned}
\end{align}
\emph{$b{-}a{+}1$ binary variables.}
If $t_2-t_1<b$, set $r([\varphi]^{[a,b]},t_1,t_2)=\rho_{\bot}$.

\subsubsection{Concatenation}
Using binary $z^C_t,z^S_t\in\{0,1\}$ for each split $t\in[t_1,t_2)$ and
auxiliary inner-minimum variables $r^C_t$:
\begin{flalign}
\begin{aligned}\label{eq:enc_cat}
    &z^C_t \geq z^C_{t-1} + z^S_t,\quad\forall t\in[t_1,t_2), \\
    &z^C_t \leq z^C_{t-1} + M(z^S_t),\quad\forall t\in[t_1,t_2), \\
    &z^C_0 = 0,\qquad \sum_{t=t_1}^{t_2-1} z^S_t = 1 \\
    &r^C_t \leq r(\phi_1,t_1,t) + M(z^C_t),\quad\forall t\in[t_1,t_2),\\
    &r^C_t \geq r(\phi_1,t_1,t) - M(z^C_t),\quad\forall t\in[t_1,t_2),\\
    &r^C_t \leq r(\phi_2,t,t_2) + M(1-z^C_t),\quad\forall t\in[t_1,t_2),\\
    &r^C_t \geq r(\phi_2,t,t_2) - M(1-z^C_t),\quad\forall t\in[t_1,t_2),\\
    &r(\phi_1\cdot\phi_2,t_1,t_2) \leq r^C_t,\quad\forall t\in[t_1,t_2),\\
\end{aligned}
\end{flalign}
\emph{$2*(t_2-t_1)$ binary variables} per concatenation.

\subsection{Full MILP (Open-Loop)}\label{subsec:full_milp}

\begin{flalign}
\begin{aligned}\label{eq:full_milp}
    \max\quad &r(\phi,0,T)\\
\text{s.t.}\quad & \text{dynamics constrants from \eqref{eq:lti_system}} \\    &\text{constraints }(\ref{eq:enc_pred})\text{--}(\ref{eq:enc_cat})
     \;\forall\text{ sub-formulae of }\phi.
\end{aligned}
\end{flalign}

\begin{theorem}[MILP Correctness]\label{thm:milp_correctness}
Let $(\mathbf{u}^\star,\mathbf{x}^\star,\mathbf{o}^\star)$ be an optimal
solution to~(\ref{eq:full_milp}) with $r(\phi,0,T) > 0$.
Then $\rho(\mathbf{o}^\star_{0,T},\phi) > 0$, and hence
$\mathbf{o}^\star_{0,T}\models\phi$.
\end{theorem}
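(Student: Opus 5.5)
The plan is to show, by structural induction on the sub-formulae of $\phi$, that at any feasible point of (\ref{eq:full_milp}) each continuous variable is a \emph{lower bound} on the true robustness of the output word it represents:
\begin{equation*}
r(\psi,t_1,t_2) \le \rho(\mathbf{o}^\star_{t_1,t_2},\psi).
\end{equation*}
Applying this at the root gives $\rho(\mathbf{o}^\star_{0,T},\phi)\ge r(\phi,0,T)>0$, and Lemma~\ref{lem:soundness} then yields $\mathbf{o}^\star_{0,T}\models\phi$. Under maximization the bounds should in fact be tight, but only the lower bound is needed. So the argument works for any feasible solution with positive objective, not just an optimal one.

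\textbf{Base and easy inductive cases.} For predicates, (\ref{eq:enc_pred}) is an equality, so the claim is exact. For hold, (\ref{eq:enc_hold}) gives $r\le r(\pi_A,t_1+k)=h(o^\star_{t_1+k})$ for all $k\in[0,d]$, hence $r\le\min_k h(o^\star_{t_1+k})$; when $t_2-t_1<d$ both sides equal $\rho_\bot$. Conjunction follows from (\ref{eq:enc_conj}) and monotonicity of $\min$. For disjunction, the binary $z^\vee$ selects one $i$ with $r\le r(\phi_i)\le\rho(\phi_i)\le\max$. For within, $\sum_t z^W_t=1$ selects some $t$ with $r\le r(\varphi,t,t_1+b)\le\rho(\mathbf{o}^\star_{t,t_1+b},\varphi)$, which is at most the max in Definition~\ref{def:traditional_robustness}; the out-of-horizon case again equals $\rho_\bot$. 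Concatenation uses the selected split $t^\ast$ (where $z^S_{t^\ast}=1$). There the big-$M$ constraints should force $r^C_{t}$ to coincide with the appropriate child variable, giving $r\le r^C_{t^\ast}\le\min\{r(\phi_1,t_1,t^\ast),r(\phi_2,t^\ast+1,t_2)\}$, bounded by the corresponding $\rho$ terms and hence by the max over splits. Throughout, I will use $M\ge\rho_\top-\rho_\bot$ to check that relaxed big-$M$ constraints are vacuous.

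\textbf{Main obstacle: negation.} Here (\ref{eq:enc_neg}) sets $r(\neg\varphi)=-r(\varphi)$, and a lower bound on $r(\varphi)$ flips into an \emph{upper} bound. The one-sided induction therefore breaks unless the children of a negation are represented exactly. My first attempt is to strengthen the hypothesis to two statements:
\begin{itemize}
\item[(a)] for sub-formulae appearing under an even number of negations, $r\le\rho$;
\item[(b)] for those under an odd number, $r\ge\rho$.
\end{itemize}
Proving (b) requires the reverse inequalities. Disjunction and within already include the $\ge$ constraints; the two-sided constraints in concatenation give equality with the selected child. Conjunction and hold, however, have only $\le$ constraints, so under negation they are unconstrained from below. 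For these I would argue that an optimal solution can be taken tight: since conjunction and hold appear in negated positions, and the objective is increasing in $-r$ there, maximization pushes those $r$ down to the bounds. This is the delicate point. If the argument fails for arbitrary optimal solutions, I would instead assume $\phi$ is in negation normal form (negation only on $H^d s$, which the syntax (\ref{eq:TWTL_syntax}) essentially allows via $H^d\neg s$), where the pure lower-bound induction goes through directly.
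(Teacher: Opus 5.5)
\textbf{The gap is the negation case, and your repair (b) goes the wrong way.} Hold, conjunction, and the top-level concatenation variable are bounded only from above. For concatenation, only $r(\phi_1\cdot\phi_2)\le r^C_t$ constrains it; the two-sided big-$M$ rows pin $r^C_t$, not $r(\phi_1\cdot\phi_2)$. Under an odd number of negations the objective is non-increasing in such a variable. Maximization therefore pushes it \emph{down}, away from its bounds, and nothing forces tightness.

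No argument can close this, because the statement is false for the encoding as written. Take $\phi=(\neg H^0\pi_A)\wedge H^0\pi_B$, so $T=0$, and $x_0$ with $h_A(o_0)>0$ and $h_B(o_0)>0$. Set $r(H^0\pi_A,0,0)=-h_B(o_0)$ and $r(\neg H^0\pi_A,0,0)=r(H^0\pi_B,0,0)=r(\phi,0,0)=h_B(o_0)$. This satisfies \eqref{eq:enc_pred}--\eqref{eq:enc_conj}. It is optimal, since $r(\phi,0,0)\le r(H^0\pi_B,0,0)\le h_B(o_0)$ at every feasible point. Yet $\rho(\mathbf{o}_{0,0},\phi)=-h_A(o_0)<0$.

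The paper's proof has the same hole. Its negation case (``direct from \eqref{eq:enc_neg}'') presupposes that the child variable equals its robustness. Maximization pushes un-negated variables up toward their bounds, where slack is harmless, but pushes negated ones away from theirs.

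\textbf{The rest of your plan is sound and is a better route than the paper's.} The paper asserts $r=\rho$ for every sub-formula. You assert only $r\le\rho$ at every feasible point, which suffices and does not need optimality. This matters even without negation. Since $r(\phi_1\cdot\phi_2)\le r^C_t$ is imposed for \emph{every} $t$, not only around the selected split, the encoded concatenation value is in general strictly below $\rho$. So the paper's claimed equality fails there, while your inequality survives.

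Two repairs are needed.
\begin{enumerate}
\item In the concatenation step, a single $r^C_{t^\ast}$ equals only one child. Use $r^C_{t^\ast-1}=r(\phi_1,t_1,t^\ast-1)$ and $r^C_{t^\ast}=r(\phi_2,t^\ast,t_2)$, i.e.\ the split $s=t^\ast-1$. Also check that the initial condition on $z^C$ excludes $t^\ast=t_1$; otherwise no $\phi_1$ bound appears and soundness fails.
\item State your fallback as a hypothesis of the theorem, not as a harmless normalization: negation appears only in the form $H^d\neg s$, encoded by $r\le -h(o_{t_1+k})$. TWTL has no duals of within or concatenation, so $\neg[\varphi]^{[a,b]}$ and $\neg(\phi_1\cdot\phi_2)$ cannot in general be pushed down to predicates. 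The restricted fragment still covers \eqref{eq:TWTL_specification}.
\end{enumerate}
Alternatively, make hold, conjunction and concatenation two-sided with selector binaries, as is done for disjunction. Then $r=\rho$ holds at every feasible point and negation becomes harmless.
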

\begin{proof}
Structural induction on $\phi$.\\
\textit{Predicate}: $r(\pi_A,t)=h(o_t)=\rho(\mathbf{o}_t,\pi_A)$ by~(\ref{eq:enc_pred}). \textit{Hold}: constraints~(\ref{eq:enc_hold}) give
$r(H^d\pi_A,\cdot)\leq r(\pi_A,t_1{+}k)$ for all $k$; tight at
$\min_k h(o_{t_1+k})=\rho(\mathbf{o}_{t_1,t_2},H^d\pi_A)$. \textit{Negation}: direct from~(\ref{eq:enc_neg}). \textit{Conjunction}: upper bounds in~(\ref{eq:enc_conj}) tighten to the
minimum. \textit{Disjunction}: lower bounds force $r\geq\max\{r(\phi_1),r(\phi_2)\}$;
big-$M$ upper bound with $z^\vee$ enforces equality.\\
\textit{Within}: sum constraint selects $t^\star$; lower bound and big-$M$
upper bound enforce $r=r(\varphi,t^\star,t_1{+}b)$
\textit{Concatenation}: upper bounds on $r^C_t$ recover the inner minimum;
sum and big-$M$ constraints enforce $r=r^C_{t^\star}$ at the maximizing
split.
Soundness follows from Lemma~\ref{lem:soundness}.
\end{proof}

\begin{remark}[Binary variable count]\label{rem:complexity}
For $\phi = [\phi_1]^{[a_1,b_1]}\cdot\cdots\cdot[\phi_k]^{[a_k,b_k]}$,
the open-loop MILP has $O(kT)$ binary variables, dominated by the
concatenation operators.
Incorporating the DFA automaton structure to replace split-point binaries
with automaton transition constraints reduces this to $O(k\log T)$
following~\cite{kurtzMixedIntegerProgrammingSignal2022}; we leave this as future work.
\end{remark}

\section{Receding-Horizon MPC}\label{sec:mpc}

We now develop the closed-loop synthesis of Problem~\ref{prob:mpc},
introducing the task-adaptive horizon, the residual formula, and the
warm-start re-solve strategy that together make the MPC tractable (Algorithm \ref{alg:mpc}).

\subsection{Residual Formula}\label{subsec:residual}

\begin{definition}[Residual Formula]\label{def:residual}
Given formula $\phi$ with DFA $\mathcal{A}_\phi$ and current DFA state
$q_t \in Q$, the \emph{residual formula} $\phi_t$ is the sub-formula of
$\phi$ that remains to be satisfied from $q_t$.
It is determined by the accepting paths of $\mathcal{A}_\phi$ from $q_t$:
\begin{equation}
    \mathbf{o}_{t,T}\models\phi_t \;\iff\;
    \delta^\ast(q_t, \mathbf{o}_{t,T}) \in F,
\end{equation}
where $\delta^\ast$ is the DFA transition function.
\end{definition}

For the typical TWTL formula
$\phi = [\phi_1]^{[a_1,b_1]}\cdot[\phi_2]^{[a_2,b_2]}\cdots
[\phi_k]^{[a_k,b_k]}$, the DFA state $q_t$ directly identifies the active
task index $i(q_t) \in \{1,\ldots,k\}$, so:
\begin{equation}
    \phi_t = [\phi_i]^{[a_i',b_i']}\cdot[\phi_{i+1}]^{[a_{i+1},b_{i+1}]}
    \cdots[\phi_k]^{[a_k,b_k]},
\end{equation}
where $[a_i', b_i'] = [a_i - (t - t_i^s),\; b_i - (t - t_i^s)]$ are
the time-shifted bounds for task $i$, and $t_i^s$ is the time task $i$ started.

\begin{remark}[Cost of residual formula extraction]\label{rem:residual_cost}
Given $q_t$, extracting $\phi_t$ costs $O(1)$ (a DFA table lookup to
identify $i(q_t)$) plus $O(k)$ to construct the remaining task chain and
adjust the time bounds $[a_i', b_i']$ for task $i$.
This is negligible relative to the MILP re-solve.
\end{remark}

\begin{remark}[Alternate residual formula] \label{rem:alt_residual}
An alternate approach for extracting the residual formula is to adapt the strategy developed in \cite{b2022runtimeMonitForTWTL} with periodic rewriting of the formula as part of the monitoring efforts. Whereas \cite{b2022runtimeMonitForTWTL} employs rewriting as an online monitoring effort, our proposed adaptation utilizes rewriting as a simplification method in order to reduce the computational load of a replan. It would do this by removing parts of the formula that have already been satisfied, reducing the number of unnecessary binary variables. This would help with both local re-plans within a DFA state, as it reduces the number of time points still in consideration, as well as a global re-plan due to the removal of already completed tasks. We leave implemenation and benchmarking the effectiveness of this approach as future work. 
\end{remark}
\subsection{Task-Adaptive Horizon}\label{subsec:adaptive_horizon}

The full formula horizon $T = \|\phi\|$ is typically large (the sum of all
task windows).
Solving the MILP over $[t, T]$ at every step is expensive and unnecessary,
since the structure of $\phi$ decomposes into sequential tasks 

\begin{definition}[Task-Adaptive Horizon]\label{def:adaptive_horizon}
Given active task $i = i(q_t)$ started at time $t_i^s$, and global safety
constraints with horizon $T_{\mathrm{safe}}$, the \emph{task-adaptive
horizon} at time $t$ is: $H_t \;:=\; \min\bigl(b_i - (t - t_i^s),\; T - t\bigr)$,
the smaller of the remaining window of task $i$ and the total remaining
horizon.
\end{definition}
\begin{remark}[Comparison with Raman et al.~\cite{raman2014model}]
\label{rem:raman_comparison}
Raman et al.\ use a \emph{fixed} horizon $H = T$ at every step, with
the executed history $x_0,\ldots,x_{t-1}$ pinned as equality constraints,
and re-encode the \emph{full} formula $\phi$ at every re-solve.
Problem~\ref{prob:mpc} differs in two ways.
First, it uses the \emph{task-adaptive} horizon $H_t \leq \max_i b_i$
(Definition~\ref{def:adaptive_horizon}), which is in general much smaller
than $T$ and resets at each task transition rather than shrinking monotonically.
Second, it encodes only the \emph{residual} formula $\phi_t$
(Definition~\ref{def:residual}), which is structurally simpler than $\phi$
once sub-tasks are completed.
Both differences are enabled by TWTL's automata structure and are not
available for general STL without an automaton.
\end{remark}

\begin{remark}[Why $H_t \leq \max_i b_i$, not $T$]\label{rem:horizon_bound}
The key observation is that task $i$ must be completed within
$b_i - (t - t_i^s)$ steps, regardless of what follows.
Planning beyond this window does not help task $i$ and unnecessarily
inflates the MILP.
Once task $i$ completes and the DFA advances, the horizon \emph{resets}
to $b_{i+1}$ for task $i+1$ rather than continuing to shrink from $T$.
The maximum prediction horizon at any step is therefore
$\max_i b_i$, \emph{not} $T$.
\end{remark}

\begin{remark}[Three horizon strategies]\label{rem:horizon_comparison}
Table~\ref{tab:horizon} compares the three strategies for the example
formula $\phi = ([H^4\pi_A]^{[0,8]}\cdot[H^4\pi_B]^{[0,10]}\cdot
[H^3\pi_C]^{[0,11]})\wedge H^{50}\neg\pi_O$
($T \approx 50$, $\max_i b_i = 11$).
\end{remark}

\begin{table}[t]\scriptsize
\centering
\caption{Horizon strategy comparison for the running example
($T=50$, $\max_i b_i=11$, $k=3$ tasks).
Binary variable count excludes the safety constraint (zero cost).}
\label{tab:horizon}
\renewcommand{\arraystretch}{1.25}
\begin{tabular}{@{}lcc@{}}
\toprule
\textbf{Strategy} & \textbf{Horizon at step $t$} &
\textbf{Binary vars (approx.)} \\
\midrule
Raman et al.\ (fixed $H=T$) & Always 50 & $\sim 150$, constant \\
Shrinking                    & $T - t$, starts at 50 & Decreasing from 150 \\
\textbf{Task-adaptive (ours)} & $\leq 11$, resets per task &
$\leq 30$, always small \\
\bottomrule
\end{tabular}
\end{table}

\subsection{Warm-Start Strategy}\label{subsec:warm_start}

At each step, the MILP must be rebuilt with the updated initial state $x_t$
and the shrunken time window $[a_i', b_i']$.
Rebuilding from scratch is expensive; we instead use a warm-start strategy
that makes the per-step overhead negligible.

\textbf{Offline precomputation.}
The MILP constraint \emph{matrices} depend on $\phi_t$ and $H_t$, not on
$x_t$.
Because the DFA has finitely many states ($|Q|$ independent of time bounds),
we precompute one parametric MILP per DFA state $q \in Q$ using the full
task window $b_i$ as the horizon.
This produces a set of matrices $\{(A_q, G_q, h_q)\}_{q \in Q}$ encoding
the equality, inequality, and integrality constraints for each residual
formula, computed \emph{once offline before execution}.

\textbf{Online update.}
At each step $t$, the online cost has three components:

\begin{enumerate}
    \item \emph{State update ($O(n)$).}
          Update the initial-condition equality:
          $x_{t+1} = Ax_t + Bu_t$ is the only RHS entry that changes.
          This is a vector update of size $n$.

    \item \emph{Window shrinkage ($O(1)$).}
          As the task window shrinks by one step, the earliest valid start
          time for the within operator increases by one.
          Fix the corresponding binary variable $z^W_{t_1+a'} = 0$
          (the newly expired start time) as a simple bound update, rather
          than removing the constraint.
          Cost: update one variable bound.

    \item \emph{Task transition ($O(n)$).}
          When the DFA advances from $q_t$ to a new state $q_{t+1}$
          (task $i$ completes), swap the precomputed MILP matrices from
          $(A_{q_t}, G_{q_t}, h_{q_t})$ to $(A_{q_{t+1}}, G_{q_{t+1}},
          h_{q_{t+1}})$ and update the initial state.
          This is an $O(1)$ pointer swap plus an $O(n)$ RHS update.
\end{enumerate}

After the RHS and bound updates, the MILP is warm-started from the previous
step's optimal solution, which is feasible for the updated problem up to
minor perturbations introduced by $x_t$.
Modern solvers (Gurobi \cite{gurobi}, MOSEK\cite{mosek}) exploit warm starts aggressively, typically
resolving in a fraction of the cold-start time when the problem changes only
slightly.

\begin{remark}[Total online cost per step]
\label{rem:online_cost}
The online overhead per MPC step --- excluding the MILP re-solve --- is
$O(n)$. This is identical to the parametric MPC observation in ~\cite{raman2014model} but applied to a \emph{progressively smaller}
MILP; binary variable count decreases as tasks complete, therefore, re-solve times
tend to shorten over the execution horizon.
\end{remark}

\subsection{MPC Algorithm}\label{subsec:mpc_algo}

\begin{algorithm}[t]\scriptsize
\caption{Task-Adaptive Receding-Horizon MPC for TWTL Synthesis}
\label{alg:mpc}
\DontPrintSemicolon
\KwIn{System~(eq. \ref{eq:lti_system}), $x_0$, formula $\phi$,
      DFA $\mathcal{A}_\phi$, precomputed MILP matrices
      $\{(A_q,G_q,h_q)\}_{q\in Q}$}
\KwOut{Applied control sequence $u_0, u_1, \ldots$}
\BlankLine
\textbf{Offline:} Precompute MILP matrices $\{(A_q,G_q,h_q)\}_{q\in Q}$;
compute $\rho_\top$, $\rho_\bot$ via LP\;
\BlankLine
Initialize DFA: $q_0 \leftarrow q_{\mathrm{init}}$;
$t_i^s \leftarrow 0$; warm-start solution $\mathbf{u}^{\mathrm{ws}}
\leftarrow \mathbf{0}$\;
\For{$t = 0, 1, \ldots, T{-}1$}{
    $i \leftarrow i(q_t)$\tcp*{Active task index: $O(1)$}
    $H_t \leftarrow \min(b_i - (t - t_i^s),\; T-t)$
    Load MILP for $q_t$: matrices $(A_{q_t}, G_{q_t}, h_{q_t})$\;
    Update RHS with $x_t$\tcp*{$O(n)$}
    Fix $z^W_{t'} = 0$ for expired start times $t' < t + a_i'$
    Warm-start solver from $\mathbf{u}^{\mathrm{ws}}$\;
    Solve MILP over $[t,\, t{+}H_t]$\;
    \If{infeasible \textbf{or} $r(\phi_t, t, t{+}H_t) \leq 0$}{
        \textbf{Report} infeasibility at step $t$; \textbf{break}\;
    }
    Apply $u_t \leftarrow u_t^\star$; observe $x_{t+1}$, $o_t = Cx_t$\;
    $\mathbf{u}^{\mathrm{ws}} \leftarrow $ shift optimal sequence by 1\;
    $q_{t+1} \leftarrow \delta(q_t,\, l(o_t))$\tcp*{DFA update: $O(1)$}
    \If{$q_{t+1} \in F$}{
        \textbf{Report} $\phi$ satisfied; \textbf{break}\;
    }
    \If{$i(q_{t+1}) \neq i(q_t)$}{
        $t_{i+1}^s \leftarrow t+1$
    }
}
\end{algorithm}

\begin{theorem}[MPC Satisfaction Guarantee]\label{thm:mpc_correctness}
If Algorithm~\ref{alg:mpc} does not report infeasibility and
$r(\phi_t, t, t{+}H_t) > 0$ at every step $t$, then
$\mathbf{o}_{0,T}\models\phi$.
\end{theorem}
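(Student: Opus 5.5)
The plan is to chain Theorem~\ref{thm:milp_correctness}, applied to each per-step MILP, with the DFA characterization of the residual formula (Definition~\ref{def:residual}). I will argue by induction over the executed steps and track one invariant: at every step $t$ the algorithm reaches, the DFA state $q_t$ lies on an accepting path consistent with the prefix already executed. The conclusion then comes from the acceptance test $q_{t+1}\in F$ in Algorithm~\ref{alg:mpc}, together with the defining property of the DFA, $\mathbf{o}_{0,T}\models\phi \iff \delta^\ast(q_0,\mathbf{o}_{0,T})\in F$.

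\textbf{Step 1 (per-step soundness).} At step $t$ the MILP is exactly the encoding (\ref{eq:full_milp}) of $\phi_t$ over the window $[t,t+H_t]$, with the initial condition fixed at the measured $x_t$. By hypothesis $r(\phi_t,t,t+H_t)>0$. Theorem~\ref{thm:milp_correctness}, shifted in time, then gives $\rho(\mathbf{o}^\star_{t,t+H_t},\phi_t)>0$, and Lemma~\ref{lem:soundness} gives that the predicted word satisfies $\phi_t$. In particular, the predicted word satisfies the active conjunct $[\phi_i]^{[a_i',b_i']}$ within the remaining window of task $i$, since $H_t$ was chosen to cover exactly that window.

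\textbf{Step 2 (the applied input is consistent).} Only $u_t^\star$ is applied. Because there is no disturbance term in (\ref{eq:lti_system}), the realized $x_{t+1}$ equals the predicted one. The shifted tail of the step-$t$ solution is therefore feasible at step $t+1$ for the same residual formula with bounds shifted by one. I also need $o_t\in$ the predicted observations, so that the realized label $l(o_t)$ matches the plan and $q_{t+1}=\delta(q_t,l(o_t))$ stays on an accepting path, that is, $\delta^\ast(q_{t+1},\cdot)$ can still reach $F$. If the DFA advances, I use Definition~\ref{def:residual} to identify $\phi_{t+1}$ as the chain starting at task $i+1$, and the invariant carries over.

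\textbf{Step 3 (termination and concatenation).} Each task window $b_i$ is finite, and positivity at every step forbids reaching a time where the within operator returns $\rho_\bot$. Hence each task completes before its deadline, and the DFA moves through tasks $1,\dots,k$. Concatenating the satisfying segments according to the concatenation semantics, with the split points $t_{i+1}^s$ serving as witnesses in the $\max$ of Definition~\ref{def:traditional_robustness}, yields $\delta^\ast(q_0,\mathbf{o}_{0,T})\in F$ and hence $\mathbf{o}_{0,T}\models\phi$.

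\textbf{Main obstacle.} The hard part is the link between positive robustness of a truncated prediction and DFA progress. The MILP only certifies the current window $[t,t+H_t]$, while a global conjunct such as $H^{T}\neg\pi_O$ spans beyond $H_t$. My first attempt is to treat such conjuncts as safety constraints encoded at each step over the horizon: if every step's prediction respects the safety constraint at its first time instant, the applied trajectory satisfies it pointwise. I will also need Definition~\ref{def:residual} to guarantee that satisfying the residual from $q_t$ composes with the executed prefix, which it does by the DFA's determinism.
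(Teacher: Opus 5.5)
Your chain is the paper's: Theorem~\ref{thm:milp_correctness} and Lemma~\ref{lem:soundness} at each step, then Definition~\ref{def:residual}, then DFA tracking. The paper's proof goes from per-step $\delta^\ast(q_t,\mathbf{o}_{t,t+H_t})\in F$ to acceptance of $\mathbf{o}_{0,T}$ in one sentence, so following it will not help you. The step you flag as the main obstacle is a genuine gap, and it is wider than the safety conjunct.

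Under Definition~\ref{def:traditional_robustness}, hold and within return $\rho_\bot$ on words shorter than their bound, and $H_t\le b_i-(t-t_i^s)$. So over $[t,t+H_t]$, every split of $[\phi_i]^{[a_i',b_i']}\cdot[\phi_{i+1}]^{[a_{i+1},b_{i+1}]}\cdots$ has first component $\rho_\bot$. The conjunct $H^{50}\neg\pi_O$ is also $\rho_\bot$ there. Hence $r(\phi_t,t,t+H_t)>0$ is only attainable if the per-step MILP encodes just the active task plus a truncated safety constraint. Positivity then certifies only those, and says nothing about the predicted word driving $q_t$ into $F$. Your Step~2 invariant (``$q_{t+1}$ stays on an accepting path'') therefore has no support.

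Step~2's other ingredients are not needed, because feasibility and positivity at every step are hypotheses. They are also not sound. The shifted tail does not cover the reset window after a task transition. Assuming no disturbances gives up the closed-loop setting the theorem is meant for.

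What closes the gap is an argument about the executed word alone, never comparing it with a plan.
\begin{itemize}
\item \textbf{Pinned first observation.} $o_t=Cx_t$, with the measured $x_t$ pinned in the step-$t$ MILP. So whatever positivity certifies at time $t$ holds for the executed word, disturbance or not. Your pointwise safety idea is exactly this, provided each MILP carries a truncated safety hold starting at $t$.
\item \textbf{Forced progress.} While task $i$ is active, $t_i^s$ is frozen and the remaining window $b_i-(t-t_i^s)$ drops by one per step. If the DFA had not left task $i$ by the time this window is shorter than $\phi_i$ still needs, the active within would return $\rho_\bot<0$, contradicting the hypothesis. By construction of $\mathcal{A}_\phi$, the DFA leaves task $i$ only after reading an executed segment from $t_i^s$ that satisfies $[\phi_i]^{[a_i,b_i]}$.
\end{itemize}
This is your Step~3, but it carries the whole proof and must be stated explicitly as this contradiction. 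Induction over $i$, together with pointwise safety, gives $\delta^\ast(q_0,\mathbf{o}_{0,T})\in F$, and DFA correctness then yields $\mathbf{o}_{0,T}\models\phi$.

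Drop the split points as witnesses in the $\max$ of Definition~\ref{def:traditional_robustness}. Nothing in your argument shows the executed segments have positive robustness, and Boolean acceptance is all the conclusion needs.
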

\begin{proof}
At each step $t$, by Theorem~\ref{thm:milp_correctness},
$r(\phi_t,t,t{+}H_t)>0$ implies
$\rho(\mathbf{o}_{t,t+H_t},\phi_t)>0$, hence
$\mathbf{o}_{t,t+H_t}\models\phi_t$ by Lemma~\ref{lem:soundness}.
By Definition~\ref{def:residual}, satisfaction of $\phi_t$ from $q_t$
means $\delta^\ast(q_t,\mathbf{o}_{t,t+H_t})\in F$.
The DFA correctly tracks execution via
$q_{t+1}=\delta(q_t,l(o_t))$, so the full trajectory $\mathbf{o}_{0,T}$
reaches an accepting state, i.e., $\mathbf{o}_{0,T}\models\phi$.
\end{proof}

\begin{remark}[Connection to runtime monitors~\cite{ahmad2023TWTLrobustness}]
\label{rem:monitor_trigger}
The runtime robustness monitors of~\cite{ahmad2023TWTLrobustness} provide
an interval $[\rho](\mathbf{o}_{0,t},\phi)$ at each step.
When the lower bound $\underline{\rho}$ of this interval approaches zero,
it signals the current trajectory is at risk.
This provides a principled, monitor-triggered replanning condition:
re-solve only when $\underline{\rho} \leq \epsilon$ for a threshold
$\epsilon > 0$, reducing unnecessary MILP solves in disturbance-free
segments.
\end{remark}

\section{Numerical Example}\label{sec:example}
\subsection{Experimental Setup}\label{subsec:setup}
For ease of comparison, we will use the same TWTL formula for all of the following examples:
\begin{equation}\label{eq:TWTL_specification}
    \phi = ([H^4\pi_A]^{[0,8]}\cdot[H^4\pi_B]^{[0,10]}\cdot[H^3\pi_C]^{[0,11]} )\land H^{50} \lnot \pi_O
\end{equation}

All of the following examples take place in a $20\times20$ continuous $\mathbb{R}^2$ workspace, where the agent moves under double integrator dynamics discretized at $\Delta t = 1$. The propositions $\pi_A$, $\pi_B$, and $\pi_C$ correspond to three square regions of interest the agent must visit and hold in, with a single square obstacle placed in the workspace. The agent starts at the origin $(0,0)$ for every run.

Each specification is encoded as a MILP and solved with Gurobi\cite{gurobi}. All timing results were collected on an Apple M1 Pro MacBook Pro with 16 GB of RAM, and reported solve times are wall clock times.

\subsection{Robustness Relevance}\label{subsec:robustness}
To show the importance of having a robustness measure, we compare planned paths: one where the solver is not optimizing for robustness, and one where it is. While both satisfy the specification, the robustness based path is qualitatively and quantitatively more robust to failures (Fig. \ref{fig:robustness_importance}).
\begin{figure}
    \centering
    \includegraphics[width=\linewidth]{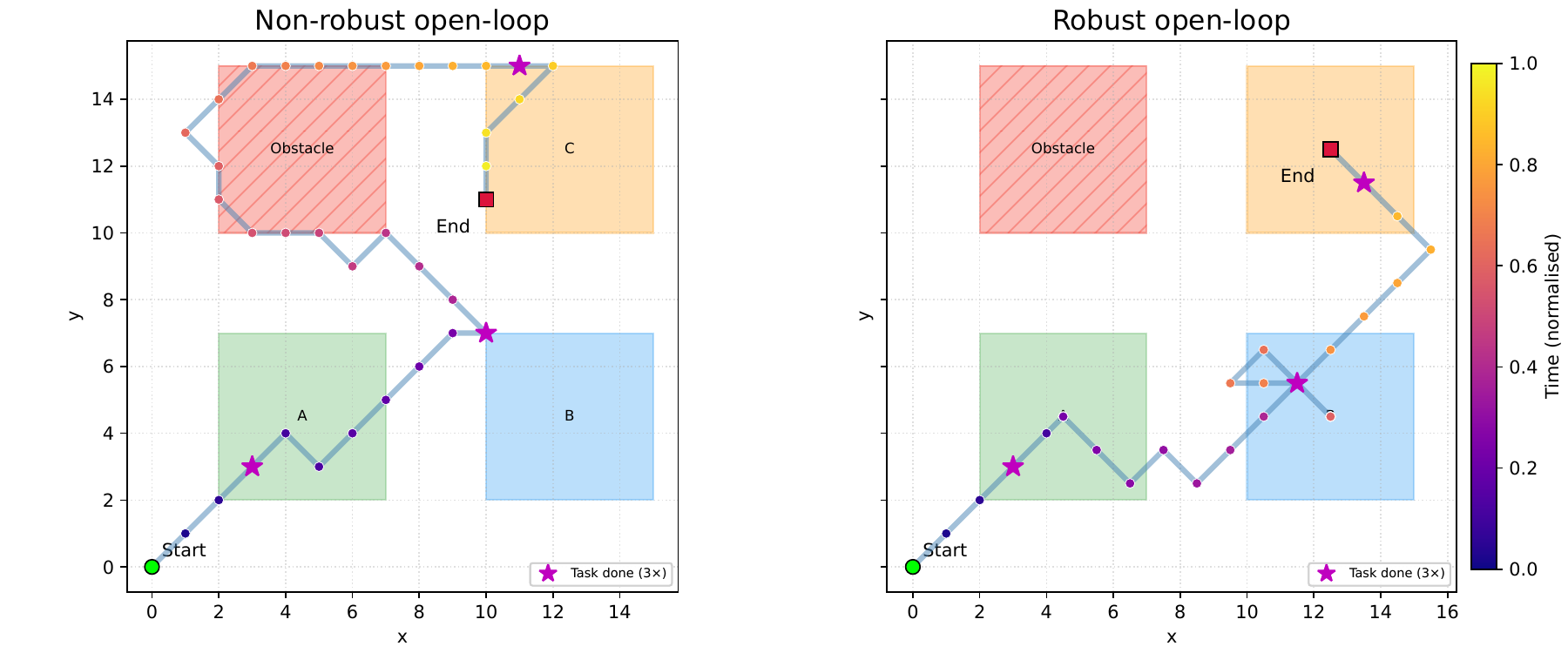}
    \caption{Open loop solving, with non-robustness based path (L) and robustness based path (R) using the specification from eq. \eqref{eq:TWTL_specification}}
    \label{fig:robustness_importance}
\end{figure}

\subsection{Open Loop Solve Time Comparison with STL}\label{subsec:open_loop_solve_time}
To compare solving times between TWTL and STL, we translate TWTL specifications into STL specifications.
In STL, \eqref{eq:TWTL_specification} translates to: $        \phi = \lozenge_{[0,4]}( \square_{[0,4]}(\pi_A) \land \lozenge_{[0,6]}(\square_{[0,4]}(\pi_B) \land \lozenge_{[0,8]}(\square_{[0,3]}(\pi_C)))) \land \square_{[0,50]}(\neg\pi_O) $ 

We benchmark TWTL and STL solve times for randomized number goals and task durations with semantically equivalent specifications (Fig.  \ref{fig:TWTLvsSTL_solvetime}). For smaller numbers of tasks, they are about equal with STL performing slightly better on average, but with four or more subtasks, TWTL performs better due to the reduction in binary variables.
\begin{figure}
    \centering
    \includegraphics[width=0.8\linewidth]{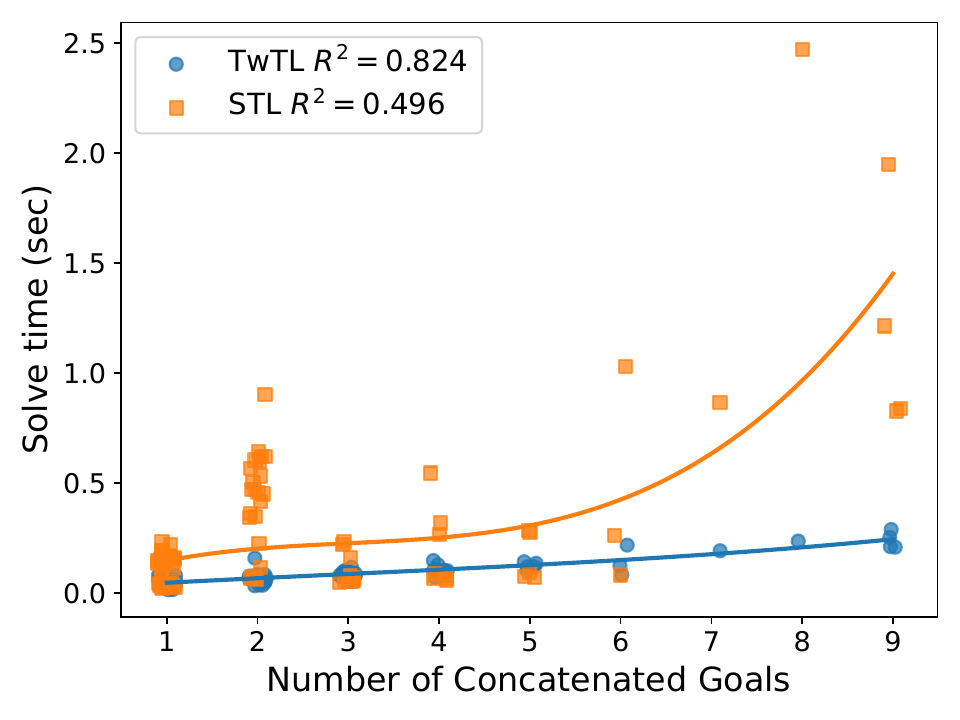}
    \caption{Solve time relative to number of concatenated goals. A 3rd order regression was applied for the plotted line, and then a 0.1 random jitter was applied to the $x$-axis for readability}
    \label{fig:TWTLvsSTL_solvetime}
\end{figure}

\subsection{Open Loop vs Closed Loop}\label{subsec:openloop}
To demonstrate the importance of closed-loop control, we introduce a disturbance to the path, displacing the agent down as it attempts to follow the open loop control inputs causing it to fail, while MPC is able to adapt and replan, completing all three tasks (Fig. \ref{fig:disturbance_handling}).
\begin{figure}
    \centering
    \includegraphics[width=0.9\linewidth]{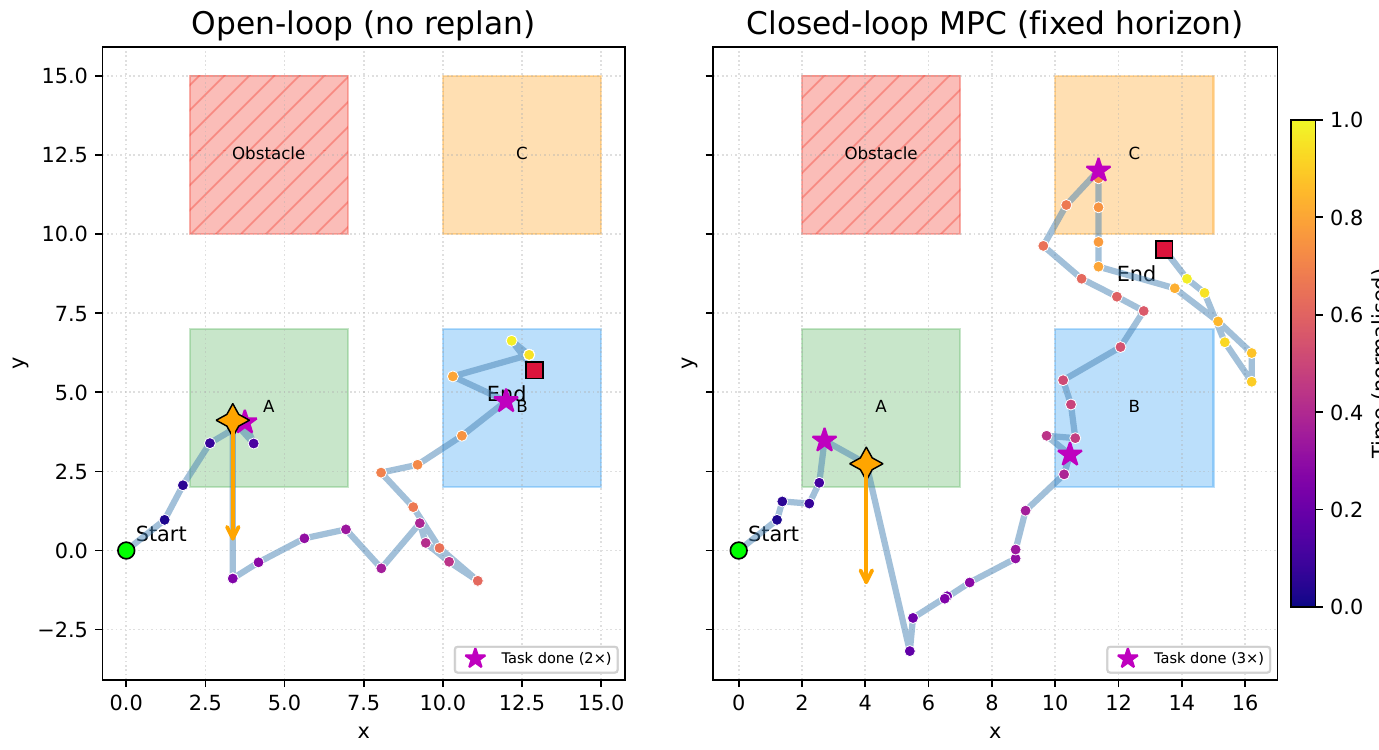}
    \caption{Comparing recovery of Open loop (L) vs Closed loop (R) MPC after a disturbance. Disturbance happened at time step 6, notated with an orange star, and was unknown prior to planning. This again follows the specification in eq. \ref{eq:TWTL_specification}. The open loop trajectory follows its original plan which just happens to wander into region B}
    \label{fig:disturbance_handling}
\end{figure}

\subsection{Fixed Horizion vs DFA approach}\label{subsec:fixed_horizion}


An advantage of TWTL is the ability to utilize a DFA for closed-loop planning and control (Sec. \ref{subsec:dfa}).
DFA replanning outperforms the fixed horizon approach, with increased benefit as the number of tasks increases (Fig. \ref{fig:replan_fixed_horizon_vs_DFA}).

\begin{figure}
    \centering
    \includegraphics[width=0.8\linewidth]{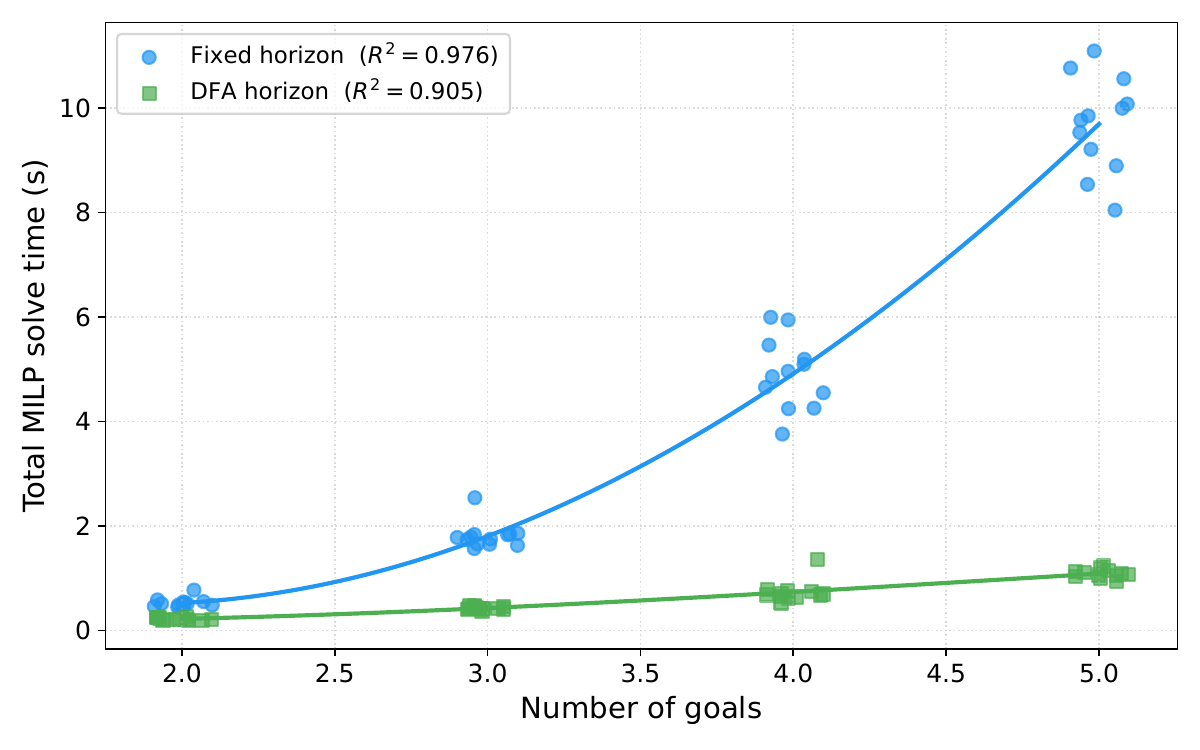}
    \caption{Total solve time over the course of a run comparing a fixed horizion approach (analogous to \cite{raman2014model}) with our proposed DFA approach over an increasing number of sequential goals.}
    \label{fig:replan_fixed_horizon_vs_DFA}
\end{figure}


\section{Conclusion and Future Work}\label{sec:conclusion}

We presented a robustness-maximizing MILP synthesis framework for TWTL
specifications, comprising an open-loop trajectory optimizer and a
closed-loop receding-horizon MPC controller.
The MILP encoding extends the approach of~\cite{raman2014model} for STL
to TWTL's hold, within, and concatenation operators, with a provable
correctness guarantee via soundness of $\rho$. We also showed that for problems which are best expressed as multiple sequential subtasks, MILP based synthesis of TWTL is more efficient than MILP based synthesis of STL.
The MPC formulation introduces a task-adaptive prediction horizon that
exploits the TWTL DFA to bound the per-step MILP size by
$\max_i b_i$ rather than $T$---a feature not available for STL---and a
warm-start strategy that reduces per-step online overhead to $O(n)$.

In future work, we plan to \begin{inparaenum}
    \item Exploit the DFA structure to further reduce
binary variables via logarithmic encoding~\cite{kurtzMixedIntegerProgrammingSignal2022}
(Remark~\ref{rem:complexity});
    \item Extend to the AGM robustness $\eta$
of~\cite{ahmad2023TWTLrobustness} via a mixed-integer second-order cone program (MISOCP) formulation;
    \item Utilize rewriting on both current state and whole specifications to reduce binary variables following. \cite{b2022runtimeMonitForTWTL}; and
    \item Apply the framework to multi-agent planning with TWTL task allocation.
\end{inparaenum}

\bibliographystyle{IEEEtran}
\bibliography{main_TWTLMILP}

\end{document}